\icmltitlerunning{Quantifying Perceptual Distortion of Adversarial Examples}
\newcommand{\eps}{\epsilon}
\newtheorem{theorem}{Theorem}
\newtheorem{definition}{Definition}
\newcommand{\norm}[1]{\left\lVert#1\right\rVert}
\newcommand{\abs}[1]{\left\lvert#1\right\rvert}
\newcommand{\paren}[1]{\left(#1\right)}
\newcommand{\setmath}[1]{\left\{#1\right\}}
\newcommand{\linf}{\ell_{\infty}}
\newcommand{\suchthat}{~\middle|~}
\DeclareMathOperator*{\argmax}{arg\,max}
\DeclareMathOperator*{\argmin}{arg\,min}
\newtheorem{lemma}{Lemma}
\begin{document}

\twocolumn[
\icmltitle{Quantifying Perceptual Distortion of Adversarial Examples}



\icmlsetsymbol{equal}{*}

\begin{icmlauthorlist}

\icmlauthor{Matt Jordan}{utcs}
\icmlauthor{Naren Manoj}{utcs}
\icmlauthor{Surbhi Goel}{utcs}
\icmlauthor{Alexandros G. Dimakis}{ece}

\end{icmlauthorlist}

\icmlaffiliation{utcs}{Department of Computer Science, University of Texas at Austin}
\icmlaffiliation{ece}{Department of Electrical and Computer Engineering, University of Texas at Austin}

\icmlcorrespondingauthor{Matt Jordan}{mjordan@cs.utexas.edu}

\icmlkeywords{Machine Learning, ICML}

\vskip 0.3in
]



\printAffiliationsAndNotice{}  

\begin{abstract}

Recent work has shown that additive threat models, which only permit the addition of bounded noise to the pixels of an image, are insufficient for fully capturing the space of imperceivable adversarial examples. For example, small rotations and spatial transformations can fool classifiers, remain imperceivable to humans, but have large additive distance from the original images. In this work, we leverage quantitative perceptual metrics like LPIPS and SSIM to define a novel threat model for adversarial attacks.

To demonstrate the value of quantifying the perceptual distortion of adversarial examples, we present and employ a unifying framework fusing different attack styles. We first prove that our framework results in images that are unattainable by attack styles in isolation. We then perform adversarial training using attacks generated by our framework to demonstrate that networks are only robust to classes of adversarial perturbations they have been trained against, and combination attacks are stronger than any of their individual components. Finally, we experimentally demonstrate that our combined attacks \textit{retain the same perceptual distortion but induce far higher misclassification rates} when compared against individual attacks. 
\end{abstract}

\section{Introduction}
Modern deep neural networks are demonstrating groundbreaking performance for image classification and other related tasks but were recently discovered to be vulnerable to input manipulations.  
Adversarial examples are slight alterations to an input that completely change the output of a classifier but are imperceivable by humans. 
Adversarial examples have received significant and justified recent attention, see e.g. \cite{Szegedy2013,Goodfellow2014-vh,Madry2017-ia, Carlini2017-qm, Su2017-op, xiao2018spatially, Engstrom2017-ib}. It was recently demonstrated that even real-world physical objects can be manufactured to fool classifiers \cite{Athalye2017-aa, Kurakin2016-qg} under various lighting and sensing conditions. Beyond the obvious security concerns, adversarial examples illustrate that modern complex models can make correct predictions for completely wrong reasons.

One fundamental issue for all security problems is \textit{precisely defining a threat model}. Obviously, it is easy to fool any classifier by replacing the input with a vastly different image, but adversarial examples have to remain visually imperceivable to humans. This notion of perceptual similarity is loosely defined and has been classically hard to quantify, though there is a rich literature in constructing perceptual similarity metrics \cite{Donahue2016-ip, Wang2004-ky, Zhang2018-mw}. However, 
$\ell_\infty$ bounds are most commonly used as a proxy for a perceptual metric in the domain of adversarial examples; every pixel is allowed to be modified independently by only a small amount. While it is true that adding a small amount, in some $\ell_p$ sense, of noise to an image will generate images that are perceptually indistinguishable, the converse is not necessarily true: not all perceptually similar perturbations can be generated by addition of small $\ell_p$ noise. Specifically, it has been shown that tiny rotations~\cite{Engstrom2017-ib} or spatial transformations~\cite{xiao2018spatially} suffice to fool an image classifier. 
 
We motivate our work as follows: if human perception is robust to the addition of small $\ell_\infty$ noise as well as small rotations, translations and spatial transformations, \emph{then it is likely that human perception is robust to combinations of these types of transformations}. Referring to $\ell_p$-bounded noise as attacks that impact the texture of an image and spatial transformations as attacks that impact the geometry of an image, we generate attacks that leverage the adversarial power of modifying the both the texture and geometry of an image by composing various attacks that each perform one type of perturbation. 

In this paper we argue that the most relevant threat model is the one governed by human perception. We phrase adversarial examples as a search over a parameter space of perturbations and demonstrate that including parameters for both texture and geometry allows a more effective search over the space of perceptually similar images. We generate adversarial attacks using both texture and geometry transformations that induce significantly higher misclassification rates than either technique alone, while retaining the same or more perceptual similarity to the original images. In the defense domain, we show that networks adversarially trained against either technique alone are vulnerable to the other technique, lending empirical credibility to the claim that there exist orthogonal attack styles \cite{Engstrom2017-ib}, and that state-of-the-art adversarially defended networks overfit on $\ell_p$-style attacks \cite{overfitt}.

\subsection{Our Contributions}
\begin{itemize}

    \item  We demonstrate that our proposed combination attacks are stronger compared to additive and spatial transformation attacks \textit{of equal or lesser perceptual distortion}. 

    \item 
    For our extensive experimental validation, we developed a new PyTorch toolbox for creating combinations of adversarial attacks using percetual metrics, geometrical transformations and various other types of image transformations. Our package, called \href{https://github.com/revbucket/mister_ed}{\texttt{mister\_ed}} , consists of more than $5$ thousand lines of code and streamlines the design of perceptual adversarial attacks and defenses. We expect that our toolbox will facilitate research in perception-based adversarial machine learning.

    \item We prove that an attack that combines an addition of $\ell_\infty$-constrained noise and a constrained spatial transformation can generate a larger class of images than either attack style alone under some mild assumptions on the image that hold almost always for real images. 
    
   \item We present evidence to suggest that various attack styles are orthogonal: networks defended against a particular style of attack are not robust to attack styles that were not defended against. 
\end{itemize}

\subsection{Related Work}

Adversarial examples are modifications to the input of a machine learning algorithm that are carefully chosen to drastically change the algorithm output. The first instances of these arose in 2004, where carefully-crafted emails were designed to evade spam filtering algorithms \cite{dalvi2004adversarial,lowd2005adversarial}. Adversarial machine learning algorithms have become an extremely popular area of study in recent years; see also the comprehensive survey by Biggio et. al \cite{biggio2017wild}. In the image classification domain, most of the literature has focused on a threat model defined by using only $\ell_p$ perturbations: attackers are only allowed to add some $\ell_p$-bounded noise to an image \cite{carlini2017adversarial, Madry2017-ia}, where the amount of noise is typically constrained to not be too large by a threat model. This noise is optimized to maximize the classifier loss by leveraging gradients of the neural network. Recently there has been some work in constructing adversarial examples by rotating, translating or performing spatial transformations on images \cite{Engstrom2017-ib,xiao2018spatially}. These are distinct threat models, as even a minor translation or rotation of an image could generate a large $\ell_p$ distance. In general, any transformation can be efficiently leveraged in an adversarial attack as long as the transformation is differentiable.

While it is clear that adding perceptual noise $\delta$ to image $x$ can be viewed as $x + \delta$, it is less clear how arbitrary spatial transformations can be modeled while maintaining differentiability. Spatial transformer networks \cite{Jaderberg2015-xm} were developed as a plug-and-play module to use as a layer in a DNN, similar in spirit to a convolutional layer. Spatial transformation layers work by computing a grid of new pixel locations and using bilinear interpolation to compute the pixel values in the new image. The grid is computed as a function of an image as well as learned parameters. Maximally these parameters can define the exact location of each new pixel, but the class of affine transformations could be modeled using only 6 parameters (4 for rotation, 2 for translation). Critically, bilinear interpolation is differentiable with respect to the parameters so we can leverage first-order methods to efficiently conduct adversarial attacks.

Although computing a quantifiable metric for the perceptual distance between two images is inherently challenging due to the subjective nature of human perception, there is a rich literature for measuring the similarity between two images. $\ell_p$ norms serve as one such proxy, though these are not robust to minor spatial transformations such as translation or rotation and have been shown to not be sufficient in adequately describing human perception \cite{Sharif2018-fd}. Other efficiently-computable similarity scores exist, such as PSNR and SSIM \cite{Wang2004-ky}. Recently work \cite{Zhang2018-mw} has demonstrated that comparing normalized cosine distances in activation layers of a DNN trained to classify real images can perform exceptionally well as a similarity score, as backed up by an extensive human study. Quantifying perceptual distortion is of high interest to computer vision, and it is safe to assume that techniques for computing image similarity will only improve over time. It is for this reason that our approach towards adversarial attacks attempts to be metric-agnostic, with the only requirements being that we require the similarity scores to be symmetric, reflexive and efficiently computable.

\section{Preliminaries} 
Throughout this paper, we let $f$ be a multiclass-classifier that maps real vectors with $n$ features into one of $C$ classes. We suppose that $f$ is trained on some distribution of real data $X$, and we refer to elements of the input space $\mathbb{R}^n$ as images. The power set operator is defined by $\mathcal{P}(\cdot)$. We are only interested in adversarial attacks where the adversary has strict constraints on how it can perturb the given inputs. Certainly this is necessary, as discussing the strength of an attack, or robustness to attacks is crucially dependent upon what the adversary is allowed to do: without such constraints, an adversary is allowed to replace a given image with any image. To this end, we define a threat model as follows
\begin{definition}[Threat Model]
A $\textbf{threat model}$ is a function $t : \mathcal{P}(\mathbb{R}^n) \rightarrow \mathcal{P}(\mathbb{R}^n)$ that takes as input a set of images and outputs a set of images that are valid perturbations the adversary can perform to an input image.\footnote{Note that, traditionally, the input to a threat model is the singleton image for which we are generating adversarial examples and the output is the set of valid adversarial examples. We define it more generally over sets as we will compose various attacks, though we overload notation as such: $t(x) := t(\{x\})$.}
\end{definition}

As an example, typical threat models considered for MNIST allow $\ell_\infty$ perturbations of $0.3$, and for CIFAR-10, $\ell_\infty$ perturbations of $8/255$ are typically allowed. With a well-defined threat model, we define an adversarial attack as follows:

\begin{definition}[Adversarial Attack]
An \textbf{adversarial attack} is a function $A: \mathbb{R}^n \rightarrow \mathbb{R}^n$ parameterized by both a classifier and a threat model mapping an image to an adversarial perturbation of the image such that for all $x\in \text{\emph{supp}}(X),\; A(x) \in t(x)$. We say the attack is \textbf{successful} on image $x$ if $f(x) \neq f(A(x))$.
\end{definition}

Our work applies to the \emph{white box} setting, where the adversary has access to both the classifier $f$ and its gradients. We focus on untargeted adversarial attacks, though our work can be easily extended to the targeted setting. For a given threat model, we now phrase the problem of defining the function $A$ as an optimization problem where 
$$A(x) := \argmax_{y\in t(x)} Loss(x, y)$$
for some loss that is maximized when $f(x) \neq f(y)$. This is typically either solved using first order techniques where projection back onto the feasible set $t(x)$ is performed after each gradient step, or, in the case of the attacks presented in \cite{Carlini2017-qm}, the threat model is brought to the objective through a Lagrange term and a binary search over the scaling factor is performed. It should be noted that in the Carlini-Wagner attacks, the threat model is typically not strict and the optimization aims to minimize some distance function while still inducing misclassification; to incorporate this under our strict threat model framework, we define a threat model and reject any generated examples outside of the threat model.

\section{Our Approach}
In this section, we detail our primary contributions. We first reiterate our motivation through a toy example. We next formalize our attack procedure and provide examples of attacks we compose. We then show that our motivation for attack composition is grounded in theory. 

\subsection{Motivating Example}
\begin{figure*}
\centering
    \subfigure{\includegraphics[width=0.7\columnwidth]{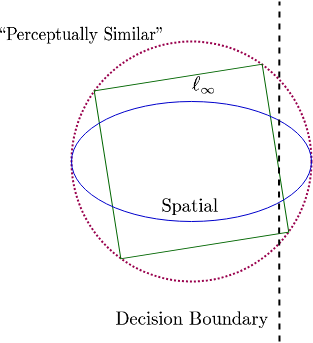}}
    \hfill
    \subfigure{\includegraphics[width=\columnwidth]{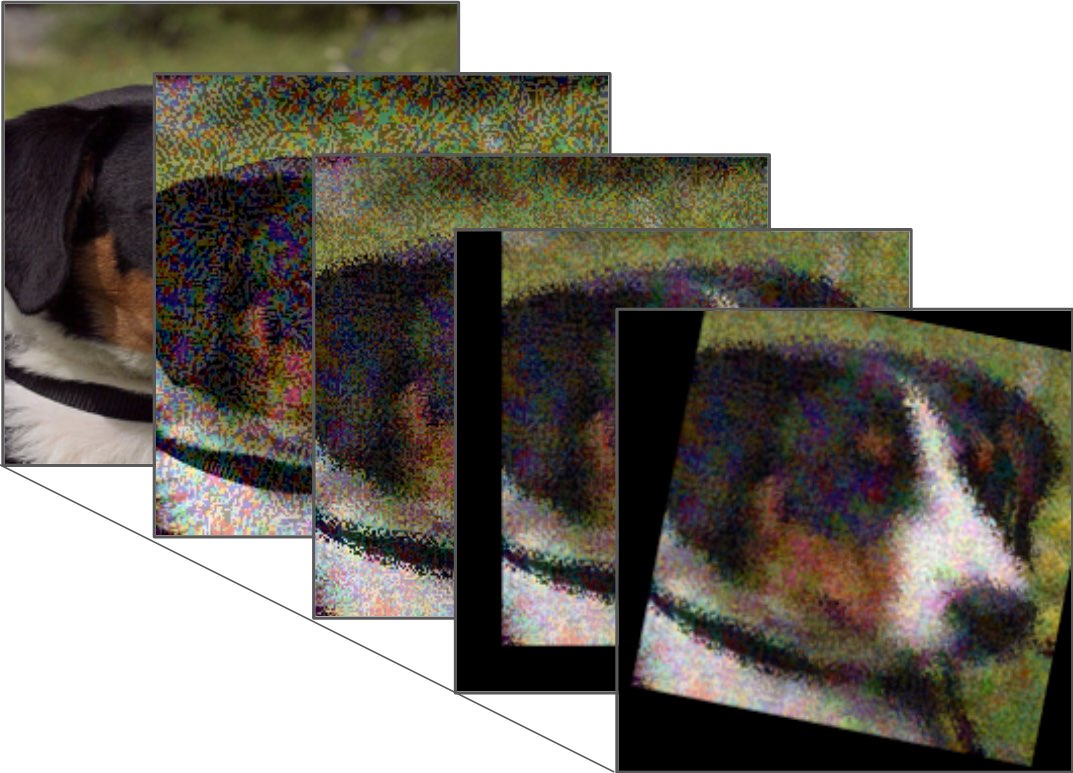}}
    \caption{Left: Observe that the space of all perceptually small perturbations is not necessarily covered by the space of small $\ell_p$ or flow perturbations. Right: We sequentially perturb this example image from ImageNet using larger-than-necessary perturbations for expository purposes. The perturbations are, from left to right: original, $\delta$-addition, flow perturbation, translation, rotation.}
    \label{fig:perceptual}
\end{figure*}

Consider $\eps$ balls centered at the origin induced by various metrics in some space, such as those shown in Figure \ref{fig:perceptual}. These balls may have some nonempty intersection; however, there are regions in our space that are within $\eps$ of the origin in some metric but not in the others. Now, if we let these metrics be the ``magnitudes'' of various perturbations, the value of this line of reasoning becomes clear. Specifically, suppose we perform some spatial transformation on our image, such as a translation. Although a small translation is usually nearly imperceptible, the $\ell_{p}$ distance between the new image and the original image is large. Thus it is likely that there exist some perceptually similar images which cannot be attained by minimizing some $\ell_p$ norm between the generated example and the original image. We can extend this same line of reasoning to other measures of perturbation magnitude. 

Recall that our goal is to generate images that are perceptually similar but result in differing model evaluations. Phrased differently, we seek to efficiently parameterize the ``perceptual ball'' around a given example $x$ in such a way that permits efficient exploration. Certainly this perceptual ball is covered by an $\ell_p$ ball of sufficiently large size, though even a small rotation has a potentially large $\ell_p$ norm while having a minimal perceptual difference from the original example. Thus, should our parameterization of our adversarial attack depend only on an $\ell_p$-constrained additive $\delta$, it is unlikely that standard iterative techniques will generate an adversarial example corresponding to a small spatial transformation. This suggests that a reparameterization of our search space is necessary. 

We can better approximate the perceptual ball by the composition of several small, parameterized perturbations to an image. Each perturbation is of a different flavor - for instance, we could add an $\ell_\infty$-constrained $\delta$ to our image and then perform an $\epsilon$ translation. By composing several attack classes together, our hope is to approximate this region of perceptually similar images and parameterize it in such a way that standard iterative optimization techniques will be able to explore the entire set of perceptually similar images. 

As an alternative way of viewing the notion above, consider the problem of image classification: given a distribution over images, typically viewed as vectors in $\mathbb{R}^n$, and their corresponding labels, a classifier attempts to learn the subset of $\mathbb{R}^n$ corresponding to each label. That is, it attempts to efficiently characterize a potentially nonconvex space. Neural nets, which simply repeatedly compose linear operators and nonlinearities, perform tremendously well on this domain. Borrowing this intuition, we can better approximate the perceptual ball by sequentially applying various classes of attacks.

\subsection{Attack Procedure}
\begin{figure*}
    \centering
    \includegraphics[width=\textwidth]{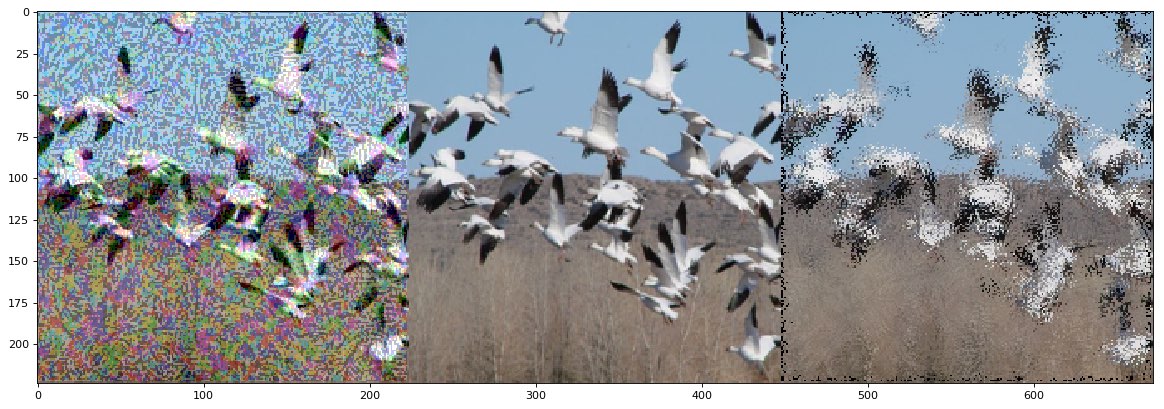}
    \caption{Here are adversarial examples generated from an image in ImageNet. The center image is the unperturbed image. The left image is generated with a large $\delta$-addition. The right image is generated with a large non-smooth spatial perturbation. Note that the perturbation generated by each loss is far larger than is necessary to result in a misclassification.}
    \label{fig:adversarial_birds}
\end{figure*}
Building on the intuition described above, we generate attacks as a composition of several smaller attacks, each of which is called a ``layer". Each layer is parameterized by its own threat model. The total threat model of the attack is simply the composition of the constituent threat models.
\[
t = t_L \circ t_{L-1} \circ \ldots \circ t_1.
\]
where $t_i$ is the threat model of layer $i$ for $i \in \{1, \ldots, L\}$.

For each image that is adversarially attacked, each layer maintains a set of parameters that are optimized during the attack generation process. Typically the threat model for each layer limits the set of parameters that are valid, e.g. each layer's parameters are $\ell_p$-bounded. Similar to DNNs, there are multiple classes of layers that can be applied. Here we list a few that we use, noting that this procedure could work with any composition of differentiable transformations.

\begin{itemize} 
    \item \textbf{Delta additions} are parameterized by $\delta$, where the transformation applied to image $x$ yields $x + \delta$. We optimize over the choice of $\delta$, where the threat model usually constrains $\delta$ to be bounded in an $\ell_p$ sense. The number of parameters for this layer is exactly equivalent to the number of pixels times the number of channels in $x$. This class of transformation is what is considered for all $\ell_p$ style attacks. 
    \item \textbf{Affine transformations} are parameterized by an affine transformation matrix $\theta$, which is a $2\times 3$ matrix. We constrain these to be rotations, translations, and dilations. Threat models for layers of this class typically constrain the maximum allowable angle, shift, and scale for rotations, translations and dilations respectively. Rotations and translation layers were considered in  \cite{Engstrom2017-ib}.
    \item \textbf{Flow networks} are parameterized by a full sampling grid. Each output pixel maintains a horizontal and vertical coordinate dictating where it is sampled from the input image~\cite{Jaderberg2015-xm}. While affine transformations can also be parameterized by a sampling grid, we use the term ``flow networks'' to specifically describe transformations that are maximally parameterized: the number of parameters is twice the total number of pixels in the input image\footnote{Note that we classify affine transformations and flows differently since their threat models vary. For example, a rotation of $5^\circ$ may be within our threat model whereas the flow change caused by this rotation on the boundaries could possibly violate the flow threat model.}. Threat models in this class constrain the $\ell_\infty$ distance between the coordinates of the original and sampled pixel. This class of transformations is considered by the work in \cite{xiao2018spatially}.
\end{itemize}
Broadly, we refer to affine transformations and flow networks as ``spatial transformations.''

\subsection{Threat Model Search Space}

Our reasoning for composing different attack types is intuitive: there exist certain images that have regions that a spatial attack can exploit well but an additive attack cannot and vice versa. For instance, flow attacks are powerful in image regions with high contrast, such as borders or object boundaries, but are weak in regions with low contrast, such as flat backgrounds or image interiors.
On the other hand, additive attacks are not as powerful as flow in high contrast areas as they can make only small perturbations whereas flows can take the value of its neighbouring pixels. By expanding our threat model, we can obtain the best of both worlds, thereby better parameterizing the space of perceptually similar images. We emphasize that by constraining the amount of additive noise and spatial transformations allowed we are able to better approximate a perceptual ball \emph{in a principled fashion}. We can also demonstrate that the search space for combined attacks is strictly larger under mild assumptions. 

\begin{theorem}[Informal] For an image $\mathcal{I}$, consider the two threat models defined by an $\linf$ perturbation of at most $\delta$ and a flow attack where grid sampling is allowed within an $\linf$ ball of size $\epsilon$. Then if there exist both low and high-contrast areas in $\mathcal{I}$, then there exists some perturbed image $\mathcal{I}^\prime$ that is attainable via a combination of the aforementioned threat models but is not attainable by each threat model alone. 
\end{theorem}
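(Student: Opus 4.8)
The plan is to exhibit an explicit perturbed image $\mathcal{I}'$ carrying a pixel-level fingerprint of each attack in a location where only that attack can produce it. The core observation is that the two threat models are local in complementary ways: an additive perturbation changes each pixel by at most $\delta$ in absolute value but is otherwise free, whereas a flow perturbation replaces the value at an output pixel by a bilinear interpolation of input values sampled within an $\linf$-ball of radius $\eps$, so the new value is confined to the convex hull---hence the min/max range---of the input values in that local neighborhood. First I would make ``low contrast'' and ``high contrast'' precise in these terms. For a scalar-valued pixel $p$ with value $v_p$, let $R_\eps(p)$ denote the interval of values achievable by bilinear interpolation as the sample location ranges over the $\eps$-ball around $p$. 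I would call $p_{\text{low}}$ a \emph{low-contrast} pixel if $R_\eps(p_{\text{low}})$ has width strictly less than $\delta$ (a flat neighborhood), and $p_{\text{high}}$ a \emph{high-contrast} pixel if $R_\eps(p_{\text{high}})$ contains some value $w$ with $\abs{w - v_{p_{\text{high}}}} > \delta$ (a sharp edge within reach). The mild hypothesis is exactly that $\mathcal{I}$ possesses at least one pixel of each type.

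With these definitions in hand, I would construct $\mathcal{I}'$ by applying flow followed by the additive layer. For the flow step, choose the sampling grid so that the output at $p_{\text{high}}$ samples the edge configuration realizing $w$ (a displacement of magnitude at most $\eps$, valid by definition of $R_\eps$), while every other output pixel---in particular $p_{\text{low}}$---samples its own location and is therefore left fixed. For the additive step, add $0$ everywhere except at $p_{\text{low}}$, where I add $\delta$. The resulting $\mathcal{I}'$ differs from $\mathcal{I}$ by $w - v_{p_{\text{high}}}$ (magnitude $> \delta$) at $p_{\text{high}}$ and by exactly $\delta$ at $p_{\text{low}}$, and agrees with $\mathcal{I}$ elsewhere. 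Since each layer individually respects its own threat model, $\mathcal{I}'$ is attainable by the combined attack.

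It remains to rule out each attack in isolation, which is where the complementary locality does the work. An additive attack alone cannot produce $\mathcal{I}'$: matching the value at $p_{\text{high}}$ would require adding $w - v_{p_{\text{high}}}$ there, violating the $\linf$ bound $\delta$. A flow attack alone cannot produce $\mathcal{I}'$ either: since $v_{p_{\text{low}}} \in R_\eps(p_{\text{low}})$ and that interval has width $< \delta$, its maximum is strictly below $v_{p_{\text{low}}} + \delta$, so the target value $v_{p_{\text{low}}} + \delta$ lies outside $R_\eps(p_{\text{low}})$ and is unreachable. Crucially, because the flow value at one output pixel depends only on that pixel's sampling location, no global choice of grid can circumvent this single-pixel obstruction. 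Hence $\mathcal{I}'$ lies in the combined threat model but in neither individual one.

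I expect the main obstacle to be the careful characterization of $R_\eps(p)$: I must verify that bilinear interpolation over the connected $\eps$-ball yields a connected (interval) set of values contained in the convex hull of the relevant grid samples, and track the interpolation stencil correctly near the boundary of the ball, where samples as far as roughly $\eps + 1$ from $p$ can enter the computation. I would state the argument for a single color channel and note that it extends channel-wise, and I would record explicitly that the hypothesis---one low-contrast and one high-contrast pixel---holds for essentially all natural images, consistent with the contrast statistics reported above.
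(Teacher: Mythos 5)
Your proposal is correct and follows essentially the same route as the paper: exhibit one low-contrast pixel where an additive change of $\delta$ exceeds anything bilinear-interpolated flow can achieve (the paper's condition $C_{\max}(p) < \delta/(2\eps)$ plays the role of your ``$R_\eps$ has width $< \delta$''), one high-contrast pixel where a flow achieves a change exceeding $\delta$ (the paper's $E_{\max}(q) \ge \delta/\eps$ plays the role of your ``$w$ with $\abs{w - v} > \delta$''), and combine the two single-pixel fingerprints to rule out each attack in isolation. The only difference is that you define low/high contrast directly via the achievable interpolation range while the paper derives those range bounds from explicit neighboring-pixel-difference conditions via the bilinear interpolation formula---exactly the ``characterization of $R_\eps$'' you flag as the remaining work.
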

We defer the formal theorem statement and proof to the appendices.

\section{Experiments} 
\begin{table*}
  \centering
    \caption{Description of attacks, corresponding threat model and optimization procedure used.}\label{tab:not}
  \begin{tabular}{cccc}
    \toprule
    \textbf{Attack}     & \textbf{Type}     & \textbf{Threat Model} & \textbf{Optimization Procedure}\\
    \midrule
    Delta & Additive transformation  & $\linf \leq 8$ & FGSM/PGD    \\
    R     & Rotation about the center & Angle $|\theta| \leq \pi/24$  & PGD    \\
    T     & Translation (horizontal and vertical)       & Distance $\leq$ 3.2 pixels &PGD  \\
    StAdv     & Flow networks       & Distance $\leq$ 1.6 pixels  &PGD with TV loss ($\tau=0.05$) \\
    \bottomrule
  \end{tabular}
\end{table*}

\subsection{Adversarial Example Toolbox} 

To support our experiments and make it easier for others to experiment with adversarial examples, we have developed an extensive adversarial example toolbox that incorporates a variety of state-of-the-art attacks, training techniques and evaluation methods. This toolbox, known as \texttt{mister\_ed} is built on top of the popular PyTorch machine learning framework \cite{paszke2017automatic} and is intended to be a PyTorch equivalent to the \texttt{cleverhans} library \cite{papernot2018cleverhans}. This toolbox fundamentally incorporates optimization over the \emph{parameter space} of transformations, as opposed to relying on simply additive transformations. This toolbox and tutorials are hosted on a public \href{https://github.com/revbucket/mister_ed}{github repo}.

\subsection{Experimental Setup} 
We experiment on classifiers trained on CIFAR-10 and ImageNet. For experiments on CIFAR-10, we use a ResNet32 architecture using pretrained weights that has 92.63\% accuracy on the validation set \cite{krizhevsky2009learning, he2016deep, cifar2018pretrained}. For experiments on ImageNet, we use a NASNet-A Mobile architecture using pretrained weights that has 74.08\% accuracy on the validation set \cite{nasnetmobile, ILSVRC15, zoph2017learning}. Adversarial examples generated using our methods on CIFAR-10 are displayed in Figure \ref{cifar_attack_suite} and further examples on both datasets are displayed in the appendices. We primarily discuss results for CIFAR-10 since we have found that ImageNet classifiers are extremely vulnerable to even the most basic adversarial attacks.
\begin{figure*}
\centering
        \hfill
    \subfigure{\includegraphics[width=0.9\columnwidth]{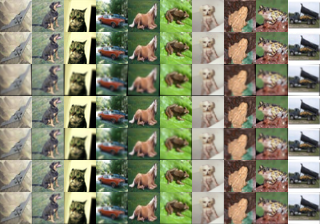}}
    \hfill 
    \subfigure{\includegraphics[width=0.9\columnwidth]{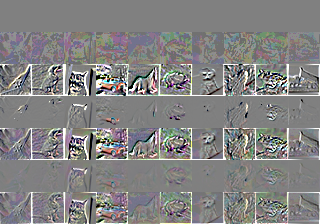}}
    \hfill
    \caption{Adversarial examples on a CIFAR-10 trained ResNet32 defended against delta addition and StAdv attacks. The left figure are the adversarial examples, with the rows being various styles of attack: [original, delta, rotation+translation, stAdv, delta+rotation+translation, delta + stAdv, delta + StAdv + rotation+translation]. The right figure is the difference between the attacked image and the original image magnified by 5.}
    \label{cifar_attack_suite}
\end{figure*}

Unless otherwise noted, we use Projected Gradient Descent (PGD) to generate our attacks. We use the Carlini-Wagner style loss using $f_6$ with $\kappa=0$ \cite{Carlini2017-qm} as the adversarial loss, presenting results for whichever is stronger. For all flow-network based attacks, we use the total variation flow loss to enforce local smoothness, with $\tau=0.05$ and refer to this style of attacks as StAdv \cite{xiao2018spatially}.  We perform PGD until the loss converges, with a minimum of 100 iterations, using Adam as an optimizer with learning rate of $0.01$, projecting into the hard constraints defined by the threat model at each iteration. Our FGSM attack has a step size of 8 and projects the attacked example onto the range of valid images after stepping. 

We have found that the ordering of the layers does not have a significant effect on the attack's effectiveness, but when describing combined attacks as Layer1 + Layer2, we perform the attack in Layer1 first and then apply Layer2. We optimize all sets of parameters for multilayer threat models simultaneously.

\begin{table*}[ht]
\centering
\caption{The table shows the accuracy of defended networks (rows) for various attacks (columns). Note that lower values imply stronger attacks and weaker defenses. For example, the entry with row FGSM and column StAdv indicates that 0.39\% of the samples were successfully defended by the network adversarially trained using FGSM, implying that StAdv is a strong attack against FGSM and FGSM is a weak defense for StAdv. As can be seen, the proposed attack (last column) is significantly stronger than all previously known attacks. 
}
\label{table:attack_vs_defenses}
  \begin{tabular}{c|lllllll}
    \toprule
                       & \textbf{Ground} & \textbf{R + T} & \textbf{FGSM} & \textbf{Delta} & \textbf{\begin{tabular}[c]{@{}l@{}}Delta + \\ R + T\end{tabular}} & \textbf{StAdv} & \textbf{\begin{tabular}[c]{@{}l@{}}Delta + \\ StAdv\end{tabular}} \\ 
                           \midrule

\textbf{Undefended}    & 92.4            & 56.02          & 18.02         & \textbf{0.00}  & \textbf{0.00}                                                     & 1.75           & \textbf{0.00}                                                     \\ 
\textbf{FGSM}          & 90.07           & 62.11          & 95.01         & 0.49           & 0.69                                                              & 2.56           & \textbf{0.05}                                                     \\ 
\textbf{Delta}         & 81.28           & 58.91          & 47.83         & 38.14          & 18.70                                                             & 14.02          & \textbf{5.42}                                                     \\ 
\textbf{Delta + R + T} & 83.53           & 70.27          & 47.45         & 34.29          & 28.24                                                             & 16.05          & \textbf{4.97}                                                     \\ 
\textbf{StAdv}         & 83.62           & 71.41          & 18.33         & 0.36           & 0.39                                                              & 23.90          & \textbf{0.07}                                                     \\ 
\textbf{Delta + StAdv} & 82.07           & 63.98          & 47.06         & 36.71          & 24.82                                                             & 24.30          & \textbf{9.74}                                                     \\ \bottomrule
\end{tabular}
\end{table*}

\subsection{Attack Effectiveness Against Adversarial Training}

Here we consider various CIFAR-10 classifiers trained via an adversarial training procedure \cite{Goodfellow2014-vh} as a means of defense against adversarial attacks. We then evaluate these defenses against both existing attacks as well as several sequential compositions of attacks. We use combinations of layers with the shorthand notations and hard constraints mentioned in Table \ref{tab:not}. The results are shown in Table \ref{table:attack_vs_defenses}.

\paragraph{Different attack layers are ``orthogonal.''} 
Combining attack layers yields remarkably powerful attacks as we have greatly expanded the set of valid adversarial images by relaxing the threat model. In particular, spatial attacks are strong against delta defended networks and vice versa. For example, the StAdv trained network can only successfully defend against delta attacks in 0.36\% of validation images. While a delta trained network is also weak against an StAdv attack (14.02\%), it is the strongest defense against orthogonal attacks. Further, the strengths of disjoint attacks seem to stack: rotation and translation alone perform poorly on a delta defended network (58.91\%), though when applied in combination with a delta attack, they decrease classifier accuracy from 38.14\% to 18.70\%. Since both delta and StAdv attacks are quite powerful by themselves, combining them should yield an even more powerful attack that performs well against both delta defended networks and StAdv defended networks. Our experimental results confirm this. The Delta + StAdv combined attack is the strongest of all attacks evaluated, even against a network trained against attacks of this style alone (9.74\%).

\paragraph{A larger threat model does not necessarily imply a stronger defense.}
Notice that every attack is best defended by a network trained against that class of attacks only. In other words, for any particular single attack style, the network that best defends against this attack is the network that saw adversarial examples of that style only. For example, a Delta + R + T defended network performs worse against a Delta attack (34.29 \%) than just a Delta defended network (38.14 \%). 

\subsection{Combined Attacks Are Strong Without the Perceptual Cost}

Observe the following simple fact: as we relax the threat model, say by increasing the allowable $\ell_\infty$ perturbations in a delta addition attack, the attack strength cannot decrease, since any attack allowed in the stricter threat model must also be allowed under the more relaxed threat model. Allowing a larger $\ell_\infty$ perturbation, however, could increase the perceptual distance of an adversarial example, suggesting that there is a tradeoff between the looseness of the threat model and perceptual distance of generated adversarial examples.

Our goal is to find a threat model that generates strong adversarial examples that also have minimal perceptual distortion. To do so, we quantify attack strength as the percentage of successful attacks against correctly classified images from a given data set. We quantify perception with both LPIPS and (1-SSIM) \cite{Zhang2018-mw, Wang2004-ky}. Note that for ease of exposition, we multiply LPIPS and (1-SSIM) distance by 1000 and 100 respectively. We try various combinations of threat models of stAdv and delta addition attacks on the network that has been defended against the combination of stAdv and delta attacks and summarize our results in Table \ref{percept_combo_table}. We display a heatmap describing the strength and LPIPS distance as a function of these bounds in Figure \ref{heatmaps}. Note that we use our best defended network for these experiments as undefended networks are so susceptible to adversarial attacks that they do not generate interpretable results. 
\begin{figure*}[ht!]
\centering
        \hfill
    \subfigure{\includegraphics[width=0.65\columnwidth]{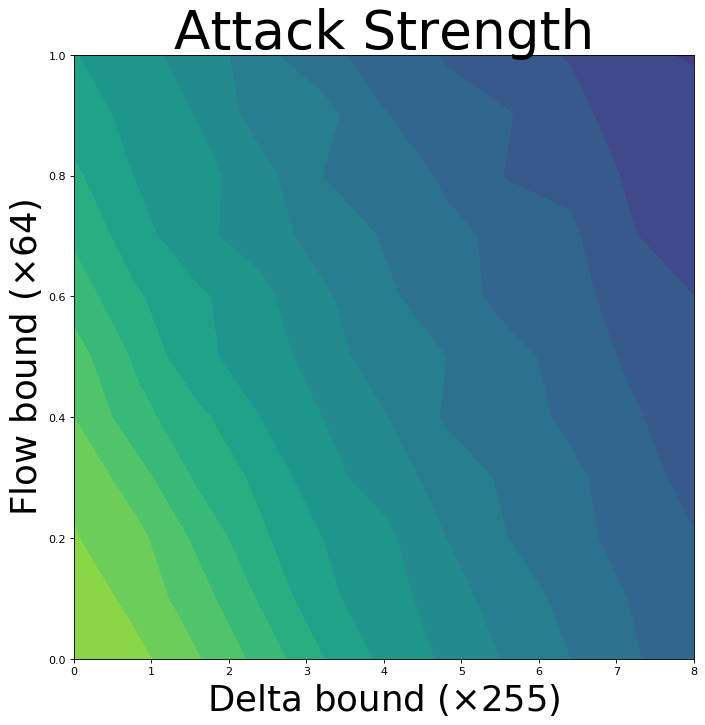}}
    \hfill 
    \subfigure{\includegraphics[width=0.65\columnwidth]{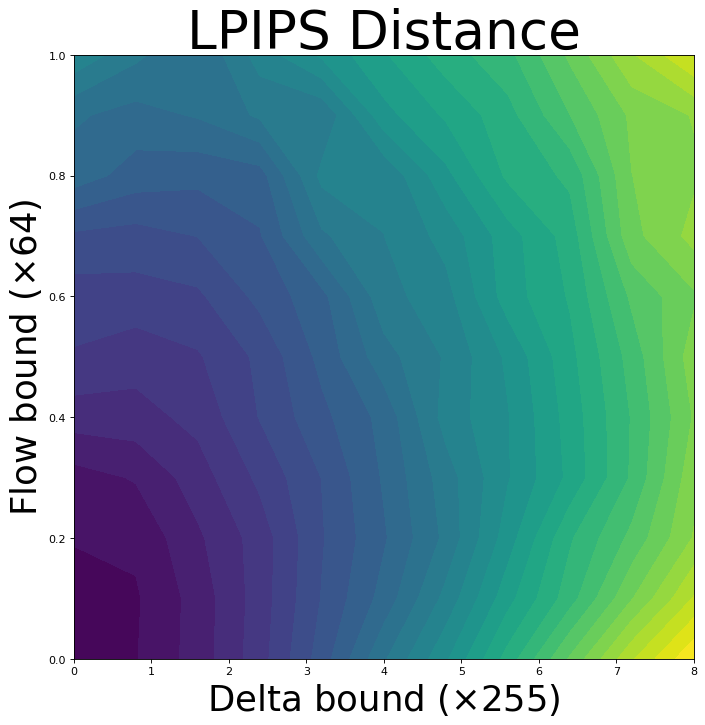}}
    \hfill
    \caption{Heatmaps of accuracy of a CIFAR-10 classified ResNet defended against combinations of additive and flow attacks (left) and the average LPIPS distance of the generated adversarial images (right). The $x$-axis of each figure corresponds to allowable additive noise, while the $y$-axis corresponds to allowable flow. As one travels away from the origin, attacks get stronger (accuracy lowers) and the perceptual distance increases. Note that moving from dark to light colour corresponds to increase in value. Crucially, along an isoline on the left, the minimal LPIPS distance is not obtained on either axis, implying combination attacks yield more perceptually indistinguishable results.}
    \label{heatmaps}
\end{figure*}

\begin{table}
\centering
\caption{Accuracies and perceptual distance of various threat models of combined delta+stAdv attacks against a network defended against delta+stAdv attacks. The Delta column refers to how much additive noise is allowed, in terms of $\ell_\infty$, and the Flow column refers to the $\ell_\infty$ bound on how much flow is allowed, in terms of pixels. The accuracy column measures the accuracy of the defended network against the attack, so smaller means a stronger attack. Our central finding is illustrated in the last row: A combination attack is both stronger and has better similarity as measured by both LPIPS and SSM.  
} 

\label{percept_combo_table}
\begin{tabular}{ccccc}
    \toprule
\textbf{Delta} & \textbf{Flow} & \textbf{Accuracy} & \textbf{LPIPS} & \textbf{(1 - SSIM)} \\ 
    \midrule
0              & 1.6             & 24.30             & 5.00         & 3.63           \\ 
8              & 0             & 36.09             & 2.83         & 1.79           \\ 
8              & 1.6             & 9.74              & 4.13         & 2.75           \\ 
4            & 0.8           & 28.20             & 2.67         & 1.67           \\ 
2           & 1.6           & 22.03             & 3.45         & 2.57           \\ 
8            & 0.4          & 22.81             & \textbf{2.63}         & \textbf{1.56}           \\ 
    \bottomrule
\end{tabular}
\end{table}

The strongest threat model is the one that allows the largest perturbations, which is able to generate examples that are classified correctly 9.74\% of the time. However, observe that this threat model generates examples that are not as perceptually indistinguishable as more constrained threat models. It is interesting to note, however, that augmenting flow attacks with delta attacks results in more perceptually indistinguishable attacks. On the other hand, incorporating a small amount of allowable flow to an additive-only attack both decreases classifier accuracy and perceptual distortion, making it a strictly better attack in both senses. This phenomenon is also present when incorporating a small amount of additive noise in an otherwise pure flow attack, though it is not as pronounced. Furthermore, notice that decreasing LPIPS values correspond with decreasing (1 - SSIM) values, suggesting that these results are likely independent of the perceptual metrics.

The heatmaps present in Figure \ref{heatmaps} display the strength and LPIPS distance of combined flow and additive attacks as a function of the bounds of the threat model of each. Observe the near-linearity in contours of the strength heatmap, indicating that there is a natural tradeoff between flow and additive noise yielding attacks of equivalent strength. The LPIPS heatmap, however tells a different story - notably, there are segments where the contours are near-perpendicular to the strength contours. This lends further credence to the idea that it is possible to generate a stronger and more perceptually indistinguishable attack by utilizing a combination threat model than either threat model in isolation. 

To further support the claim that combination attacks can generate more perceptually indistinguishable attacks, we perform a modified Carlini-Wagner attack where the distance metric used is the LPIPS perceptual metric, and two (unbounded) threat models are considered: one using only a parameterization of additive noise, and another using a parameterization of a spatial and additive attack. Such an attack is certainly overpowered due to the unboundedness in the threat model, achieving misclassification in all input images considered. The corresponding perceptual distance of each of these attacks against our Delta defended network is displayed in table \ref{lpips_cw}. Observe that directly regularizing against perceptual metrics generates vastly more imperceivable attacks according to both metrics considered. Interestingly, the combination attacks yield perceptual distances that are significantly smaller than their corresponding attacks parameterized only by an additive delta, which holds over both perceptual metrics.

\begin{table}
\centering
\caption{Carlini Wagner style attacks performed with LPIPS used as a distance metric against our Delta defended network. 
Both of these attacks are successful on 100\% of the examples considered; however when the parameterization of transformations includes a spatial transformer, the adversary is able to generate examples with lower perceptual distance.
} 

\label{lpips_cw}
\begin{tabular}{ccc}
    \toprule
\textbf{Parameterization} & \textbf{LPIPS distance} & \textbf{(1-SSIM)} \\ 
    \midrule
Delta              & 0.231             & 0.0383            \\ 
Delta + Flow       & 0.158            & 0.0245               \\ 

    \bottomrule
\end{tabular}
\end{table} 

\section{Conclusions} 
In this paper, we aim to generate adversarial examples of minimal distortion under a perceptual metric. To our knowledge, we are the first to employ these metrics, commonly used in several other applications, to the adversarial example domain. We present a framework that allows for the combination of various styles of adversarial attacks in order to better parameterize the space of perceptually similar images. We support this by first proving that combined attacks can result in images previously unattainable by individual attacks. Empirically we apply our combination attacks to show that adversarial defenses do not transfer their robustness properties to unseen attack styles. Finally, we demonstrate that combination attacks can generate adversarial examples that are both significantly stronger and more perceptually indistinguishable than any pure attack alone. 

\bibliography{perceptual_aes}
\bibliographystyle{icml2019}

\begin{appendices}

\section{Threat Model Search Space}
We first describe notation we use throughout the proofs and tie it back to the perturbations we experiment with. We then formulate and prove our main result.

\subsection{Notation and Preliminaries}

In this section, we treat an image as a two-dimensional grid of pixels. Each pixel has some value corresponding to its color value, which is some number between $0$ and $255$. We define a quadrant associated with pixel $x$ to be a grid of four pixels defined by pixels in the neighborhood of $x$. Figure \ref{fig:quadrants} depicts four quadrants surrounding a reference pixel $x_{00}$. If our reference pixel for some fixed quadrant is $x$, then we relabel this pixel $x_{00}$. The remaining pixels in the quadrant are then relabeled according to Figure \ref{fig:bi_interpolate}.

\begin{figure}[H]
    \centering
    \includegraphics[scale=0.25]{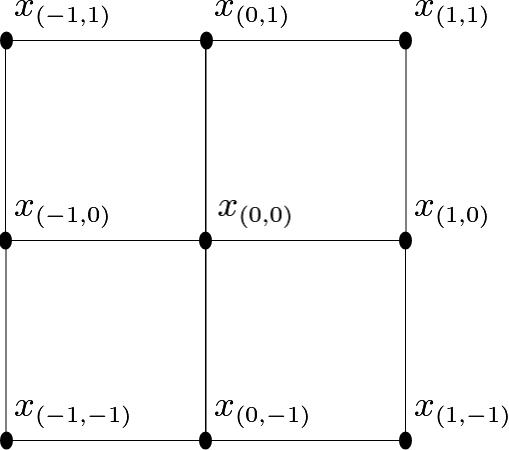}
    \caption{The figure above depicts four quadrants surrounding some interior reference pixel $x_{(0, 0)}$. Note that border or corner pixels will not have four surrounding quadrants.}
    \label{fig:quadrants}
\end{figure}




Our result is dependent on the changes between pixels in some neighborhood. The following definition formalizes the notion of local change we use throughout the remainder of this section.

\begin{definition}[Local changes]
We define the following quantities pertaining to local changes with respect to some reference pixel. We first define the local change between the reference pixel and its neighboring pixels:
\[
C_{\max}(x_{00}) \coloneqq \max_{i, j \in \setmath{-1, 0, 1}} \abs{x_{ij} - x_{00}} 
\]
We also consider the maximum local change while only taking into account the off-diagonal pixels. This can be written below:
\[
E_{\max}(x_{00}) \coloneqq \max_{i,j \in \{-1, 0, 1\} \text{s.t.} |i| \neq |j|}\abs{x_{ij} - x_{00}}
\]
\end{definition}

We also treat perturbations as functions $f:X\times \Theta \rightarrow X$ that take as input a pixel $x$ in the space of possible pixels $X$ and perturbation parameters $\theta$ and outputs another pixel. For clarity, we denote the flow perturbation function as $f_{\text{flow}}$ and the $\linf$ perturbation as $f_{\text{add}}$. Furthermore, observe that a perturbation's parameters $\theta$ must be constrained by some preset threat model. 

We now describe the perturbations we consider in our result.

\paragraph{Quantifying Flow}

Suppose our threat model allows a maximum flow perturbation at each pixel to be $\eps$. Thus, for any pixel, observe the following bounds on the horizontal and vertical flows (respectively) for a given pixel:
$$0 \le \eps_h, \eps_v \le \eps$$
Now, recall that a flow perturbation entails performing grid sampling from the grid of images, where the sampling is performed by bilinear interpolation. In other words, a pixel can flow into one of the (at most) four quadrants depicted in Figure \ref{fig:quadrants}. To explicitly quantify the value our new pixel assumes upon performing the bilinear interpolation, we state and prove the following key known equality we use throughout our proofs. 

\begin{figure}[H]
    \centering
    \includegraphics[scale=0.40]{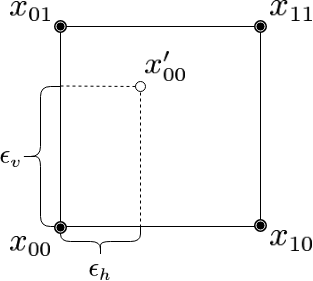}
    \caption{The figure above depicts a single quadrant with reference pixel $x_{00}$. If we wish to perturb $x_{00}$ to $x_{00}^\prime$ using a flow perturbation, then $\eps_h$ and $\eps_v$ give the necessary horizontal flow and vertical parameters, respectively. This figure also depicts our pixel labeling convention once we select a quadrant.}
    \label{fig:bi_interpolate}
\end{figure}

\begin{lemma} 
\label{lemma:flow_function}
If a pixel $x_{00}$ is perturbed by a flow attack into a quadrant whose corners are $x_{00}, x_{01}, x_{10}$, and $x_{11}$, and if the horizontal and vertical perturbations are given by $\eps_h$ and $\eps_v$ respectively, then 
\begin{align*}
f_{\text{flow}}(x_{00}, (\eps_h, \eps_v)) &= x_{00} + (x_{10} - x_{00})\paren{\paren{1 - \eps_v}\eps_h}\\
&\quad + (x_{01} - x_{00})\paren{\eps_v\paren{1 - \eps_h}}\\
&\quad+ (x_{11} - x_{00})\paren{\eps_v\eps_h}.
\end{align*}
\end{lemma}
\begin{proof}
We use the bilinear interpolation procedure. Without loss of generality, we apply the horizontal perturbations first. The endpoints of the line we must vertically interpolate over are now:
$$x_{00}\paren{1 - \eps_h} + x_{10}\eps_h \quad \text{and}\quad x_{01}\paren{1 - \eps_h} + x_{11}\eps_h.$$
Performing the vertical interpolation yields:
\begin{align*}
    &\ \paren{1 - \eps_v}\paren{x_{00}\paren{1 - \eps_h} + x_{10}\eps_h} + \eps_v\paren{x_{01}\paren{1 - \eps_h} + x_{11}\eps_h} 
\end{align*}
Rearranging the above gives the result.

\end{proof}
Note that the magnitude of this perturbation can be expressed in terms of the pixel differences between the pixels in the appropriate quadrant and the reference pixel:
\begin{align*}
\abs{f_{\text{flow}}(x_{00}, (\eps_h, \eps_v)) - x_{00}} &= |(x_{10} - x_{00})\paren{\paren{1 - \eps_v}\eps_h}\\
&\quad+ (x_{01} - x_{00})\paren{\eps_v\paren{1 - \eps_h}}\\
&\quad + (x_{11} - x_{00})\paren{\eps_v\eps_h}|.
\end{align*}

Observe that as a consequence of the way we characterize flow perturbations, our flow upper bound $\eps$ must lie in $[0, 1]$. Thus, the set of all allowable outputs using a flow threat model with parameter $\eps$ is:
$$\setmath{f_{\text{flow}}(x, (\eps_h, \eps_v)) \suchthat 0 \le \eps_h, \eps_v \le \eps}$$
We can also specify the perturbation for the $\delta$ addition transformation:
$$f_{\text{add}}(x, \delta) = x + \delta$$
As a result, the set of all allowable outputs using a $\delta$ addition threat model with parameter $\delta$ is:
$$\setmath{f_{\text{add}}(x, \delta_a) \suchthat \abs{\delta_a} \le \delta}$$

\subsection{Main Theorem and Proof}

We now state and prove our main result.

\begin{theorem}[Formal version of Theorem 1] 
Consider an image $\mathcal{I}$ and associated $\linf$ and flow perturbation constraints of $\delta$ and $\eps$, respectively. Suppose there exist two distinct pixels $p, q \in \mathcal{I}$ such that $C_{\max}(p) < \frac{\delta}{2\eps}$ and $E_{\max}(q) \ge \frac{\delta}{\eps}$. Then, we can apply an $\delta$ addition to $p$ and a flow perturbation to $q$ which results in a new image $\mathcal{I}^\prime$ that is not attainable by using solely a $\delta$ addition or solely a flow perturbation.
\end{theorem}
\begin{proof}
To prove our main theorem, it suffices prove the following two lemmas. 
\begin{figure*}[t]
    \centering
    \subfigure{\includegraphics[width=0.47\columnwidth]{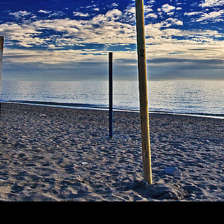}}
    \hfill
    \subfigure{\includegraphics[width=0.47\columnwidth]{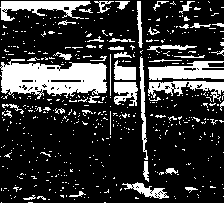}}
    \hfill
    \subfigure{\includegraphics[width=0.47\columnwidth]{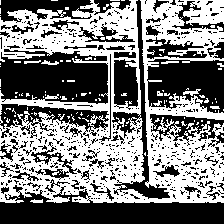}}
    \hfill
    \subfigure{\includegraphics[width=0.47\columnwidth]{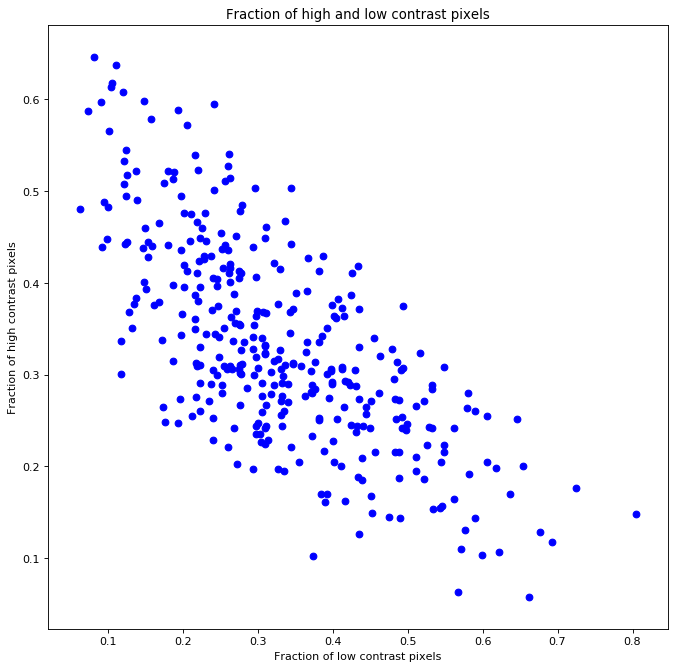}}
    \caption{Left three images: An example from ImageNet highlighting the low and high contrast conditions we impose. The leftmost image is the original image. We then take the blue channel and apply our condition to the resulting pixel values. The middle image highlights in white the areas of low contrast. The right image highlights in white the areas of high contrast. Right: Scatter plot depicting the percentage of pixels satisfying each condition across 384 randomly sampled examples from CIFAR-10.}
    \label{fig:hilo_contrast}
\end{figure*}
The first lemma characterizes regions of our image that may benefit more from additive changes than from geometric changes. Such regions of the image may exhibit local smoothness, and as a result, there is not much geometry present to modify. 
\begin{lemma}
\label{lemma:flow_sucks}
Suppose that for some pixel $x_{00}$, we have $C_{\max}(x_{00}) < \frac{\delta}{2 \eps}$. Then, it follows that an attack using a $\delta$ addition results in a greater perturbation than an attack using at most $\eps$ flow. 
\end{lemma}
\begin{proof}
To prove the lemma, we must show that for all flows, the maximum pixel change is less than $\delta$. Without loss of generality, assume the flow moves to the top right quadrant. Then we have:
\begin{align*}
&\abs{f(x_{00}, (\eps_h, \eps_v)) - x_{00}}\\
&= \left|\paren{x_{10} - x_{00}}\paren{\eps_h\paren{1 - \eps_v}} + \paren{x_{01} - x_{00}}\paren{\eps_v\paren{1 - \eps_h}}\right.\\
&\quad+ \left.\paren{x_{11} - x_{00}}\eps_h\eps_v\right|\\
&\leq \abs{x_{10} - x_{00}}\paren{\eps_h\paren{1 - \eps_v}} + \abs{x_{01} - x_{00}}\paren{\eps_v\paren{1 - \eps_h}}\\
&\quad+ \abs{x_{11} - x_{00}}\eps_h\eps_v\\
& \leq C_{\max}(x_{00})\paren{\paren{1 - \eps_h}\eps_v + \paren{1 - \eps_v}\eps_h + \eps_v\eps_h}\\
& = C_{\max}(x_{00})\paren{\eps_h + \eps_v - \eps_v\eps_h}\\
& \le  2 \eps C_{\max}(x_{00}) < \delta
\end{align*}
Here the second inequality follows from observing that for any possible pixel change in the neighborhood, $\abs{x_{ij} - x_{00}} \leq C_{\max}(x_{00}) < \delta$. The third inequality follows from setting $\eps_h, \eps_v \leq \eps$.
\end{proof}

The following lemma characterizes regions of our image that are not locally smooth. Such regions of the image have extensive geometry that a flow attack might be able to exploit in ways that an $\linf$ attack cannot. We explicitly describe one such sufficient condition for this to occur.
\begin{lemma}
\label{lemma:linf_sucks}
Consider some reference pixel $x_{00}$, if $E_{\max}(x_{00}) > \frac{\delta}{\eps}$, a flow attack on $x_{00}$ with limit $\delta$ can result in a greater pixel change than an additive change of at most $\delta$ on $x_{00}$.
\end{lemma}
\begin{proof}
Without loss of generality, suppose $\abs{x_{01} - x_{00}} \ge \frac{\delta}{\eps}$. Set $\eps_h = 0$ and $\eps_v = \eps$. Using Lemma \ref{lemma:flow_function}, we have
\begin{align*}
    &\abs{f(x_{00}, (0, \eps)) - x_{00}}\\
    & =|\paren{x_{10} - x_{00}}\paren{\eps_h\paren{1 - \eps_v}} + \paren{x_{01} - x_{00}}\paren{\eps_v\paren{1 - \eps_h}}\\
    &\quad+ \paren{x_{11} - x_{00}}\eps_h\eps_v| \\
    &= \eps\abs{\paren{x_{01} - x_{00}}} > \eps \cdot \frac{\delta}{\eps} > \delta.
\end{align*}
This gives us a flow attack that is more powerful that a $\delta$ additive attack.
\end{proof}
Consider the pixels in our image satisfying the conditions required by Lemma \ref{lemma:flow_sucks}. By using a $\delta$ addition on these pixels instead of using a flow perturbation, we can achieve a greater pixel difference between the perturbed image and the original image than any flow attack. Similarly, we can achieve greater pixel differences if we use a flow perturbation on the pixels satisfying the conditions required by Lemma \ref{lemma:linf_sucks} instead of a $\delta$ addition. Since by the assumption in the theorem, both these sets are non-empty, we can produce an image that was not attainable solely by either perturbation.
\end{proof}

\subsection{Existence of High and Low Contrast Areas}
It is important to note that for our theorem statement to hold, we require that there exist low contrast areas as well as high contrast areas. To verify this, we inspected 384 randomly selected examples from CIFAR-10 and counted how many pixels satisfy our low or high contrast conditions. Our results are detailed in Figure \ref{fig:hilo_contrast}. Notably, a large fraction of pixels in each image we sampled satisfies one of our constraints, and every image contains at least one high and low contrast area\footnote{The two constraints are disjoint, so at most one can be satisfied for a given pixel.}. This indicates that combining these attacks gives us the power of exploring a larger set of potential adversarial images compared to any one attack individually.

\section{Images}

\label{appendix:images}
\begin{figure*}[ht]
    \centering
    \subfigure{\includegraphics[width=12cm]{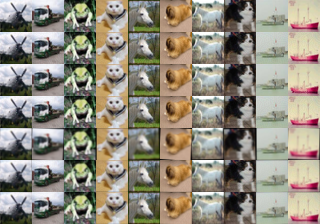}}
    \vfill 
    \subfigure{\includegraphics[width=12cm]{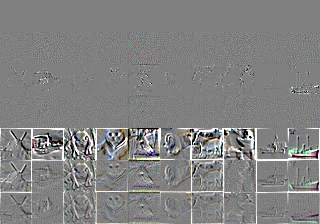}}
    \caption{Top: Various CIFAR-10 images with various successful attacks against an undefended network on each row. From top to bottom these are: originals, delta, StAdv, delta + StAdv, rot+trans, delta + rot + trans, delta + StAdv + rot + trans. Bottom: The differences between originals magnified by 5}.\label{fig:cifar_suite_undefended}
\end{figure*}    

\begin{figure*}[ht]
    \centering
    \subfigure{\includegraphics[width=12cm]{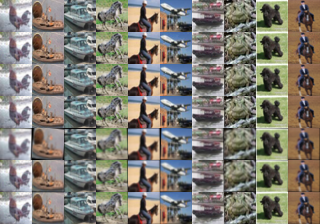}}
    \vfill 
    \subfigure{\includegraphics[width=12cm]{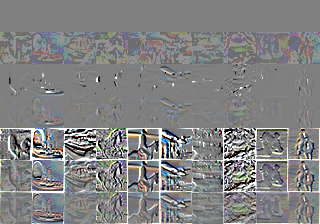}}
    \caption{Same figure as \ref{fig:cifar_suite_undefended} but with attacks performed against a delta-trained network}.
\end{figure*}    

\begin{figure*}[ht]
    \centering
    \subfigure{\includegraphics[width=12cm]{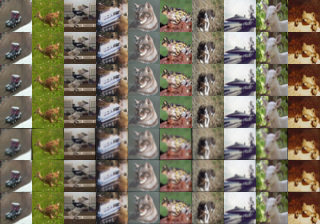}}
    \vfill 
    \subfigure{\includegraphics[width=12cm]{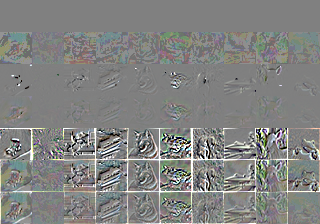}}
    \caption{Same figure as \ref{fig:cifar_suite_undefended} but with attacks performed against a delta + stadv trained network}.
\end{figure*}   

\newpage
\begin{figure*}[ht]
    \centering
    \subfigure{\includegraphics[scale=0.25]{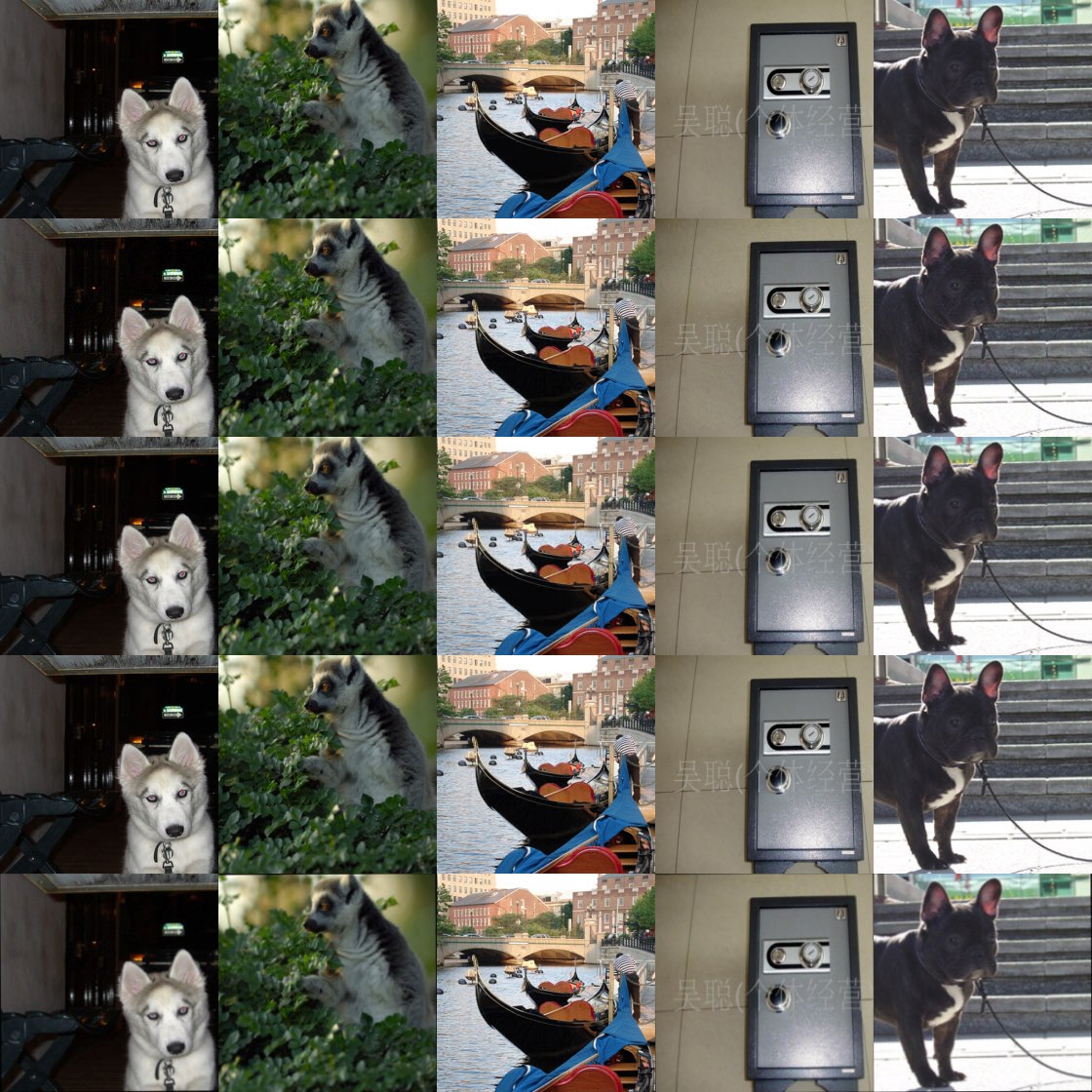}}
    \vfill 
    \subfigure{\includegraphics[scale=0.25]{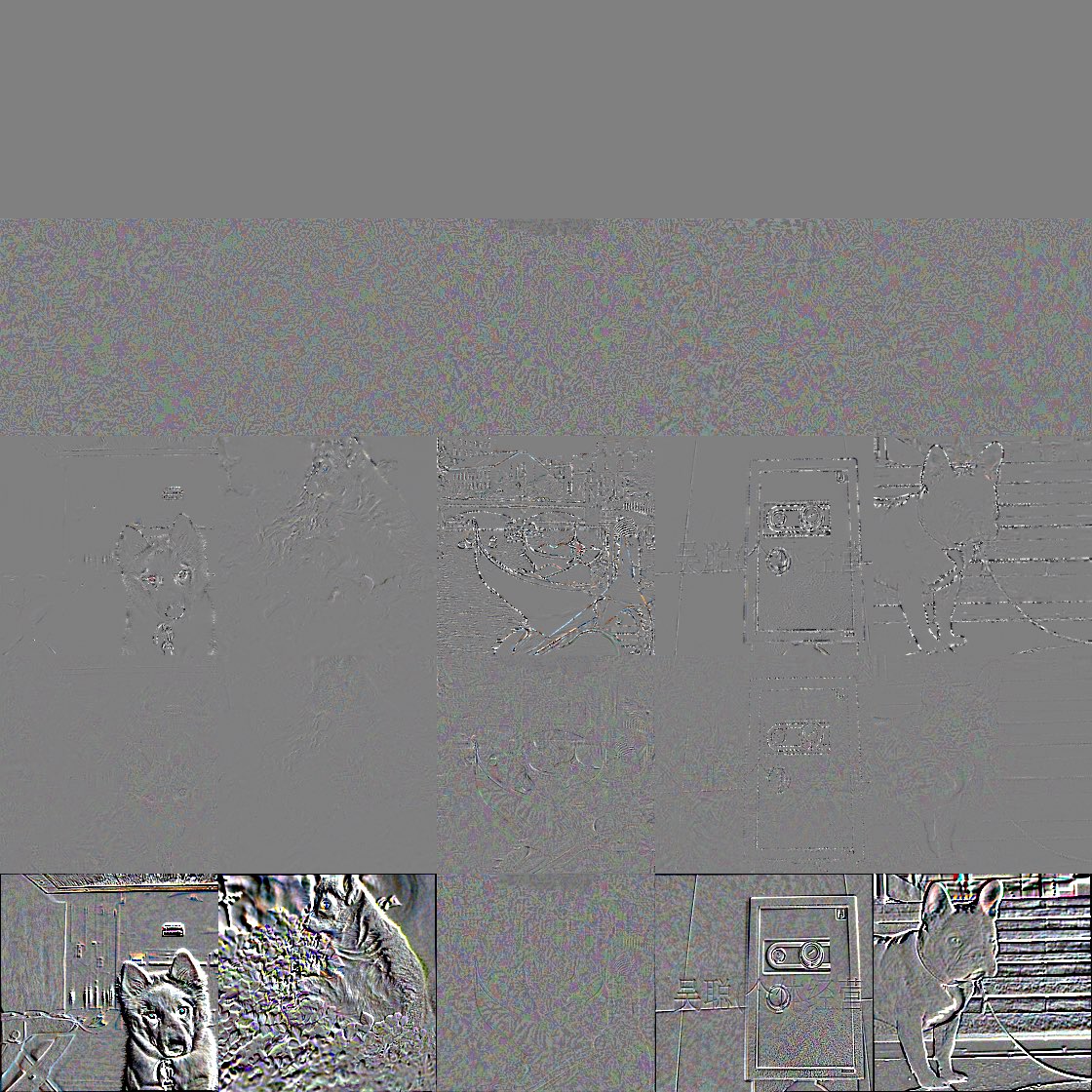}}
    \caption{Top: Various ImageNet images with various successful attacks against an NasnetAMobile network on each row. From top to bottom these are: originals, delta, stAdv, delta + stAdv, rot+trans, delta + rot + trans. Bottom: The differences between originals magnified by 10. From this it is clear that completely imperceivable perturbations are all that is necessary to fool undefended ImageNet networks}\label{fig:imagenet_suite}
\end{figure*}    

\begin{figure*}[ht]
    \centering
    \subfigure{\includegraphics[scale=0.25]{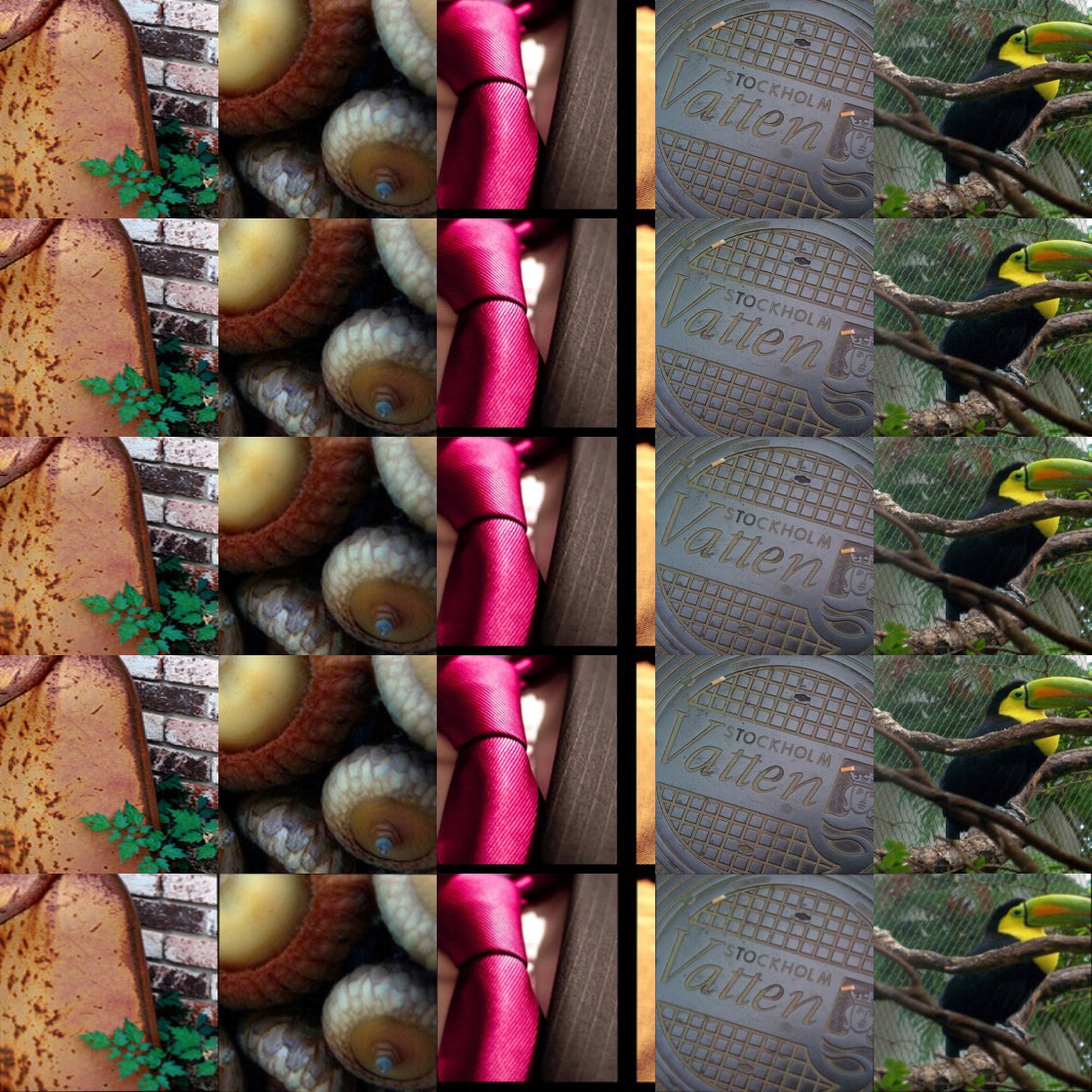}}
    \vfill 
    \subfigure{\includegraphics[scale=0.25]{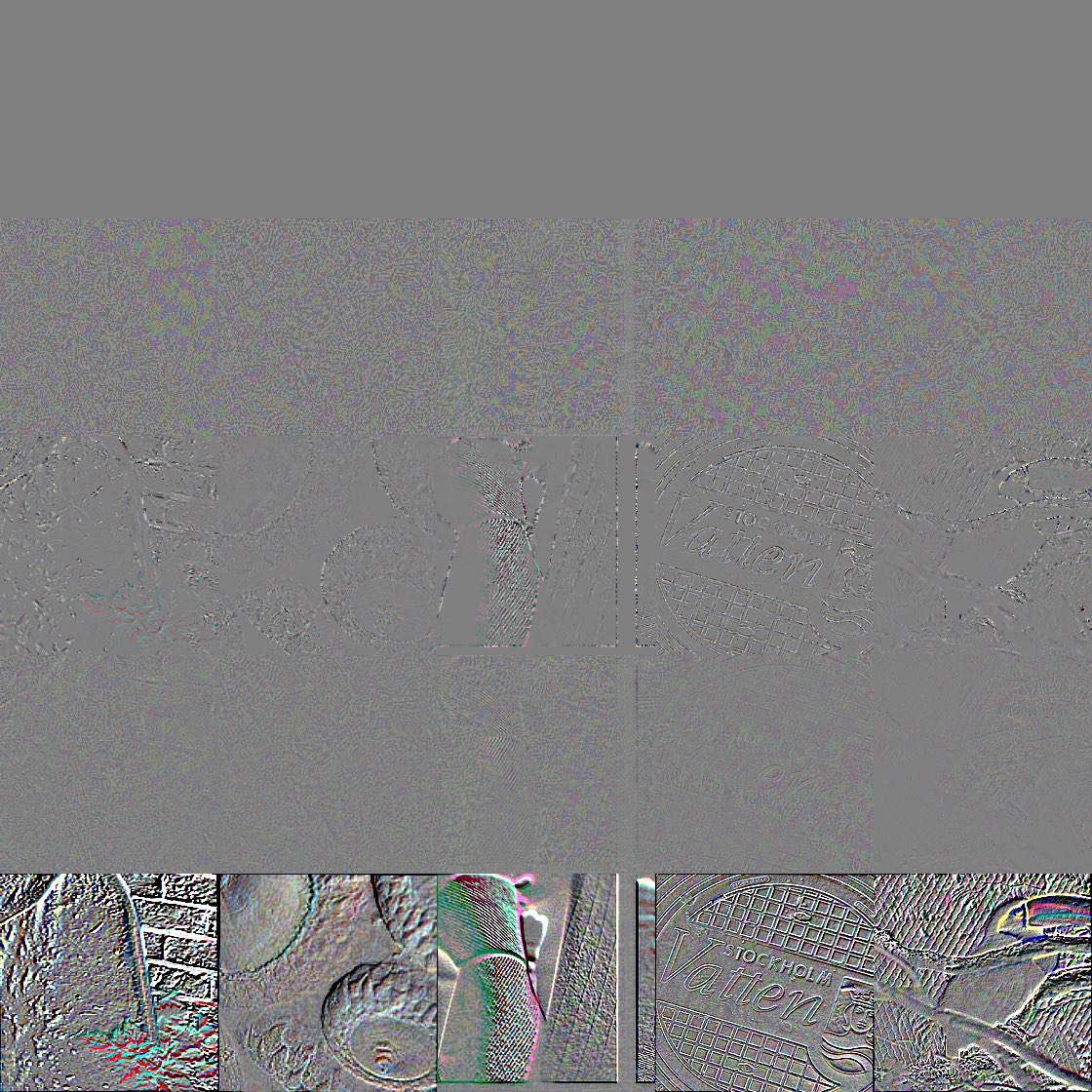}}
    \caption{Same as figure \ref{fig:imagenet_suite}}
\end{figure*}
\end{appendices}



\end{document}


\twocolumn[
\icmltitle{Supplemental for Quantifying Perceptual Distortion of Adversarial Examples}



\icmlsetsymbol{equal}{*}

\begin{icmlauthorlist}
\icmlauthor{Matt M. Jordan}{utcs}
\icmlauthor{Naren Manoj}{equal,utcs}
\icmlauthor{Surbhi Goel}{equal,utcs}
\icmlauthor{Alexandros G. Dimakis}{utee}
\end{icmlauthorlist}

\icmlaffiliation{utcs}{Department of Computer Science, University of Texas, Austin, USA}
\icmlaffiliation{utee}{Department of Electrical and Computer Engineering, University of Texas, Austin, USA}

\icmlcorrespondingauthor{Matt Jordan}{mjordan@cs.utexas.edu}
\icmlkeywords{Machine Learning, ICML}

\vskip 0.3in
]



\printAffiliationsAndNotice{\icmlEqualContribution} 

\section{Threat Model Search Space}
We first describe notation we use throughout the proofs and tie it back to the perturbations we experiment with. We then formulate and prove our main result.

\subsection{Notation and Preliminaries}

In this section, we treat an image as a two-dimensional grid of pixels. Each pixel has some value corresponding to its color value, which is some number between $0$ and $255$. We define a quadrant associated with pixel $x$ to be a grid of four pixels defined by pixels in the neighborhood of $x$. Figure \ref{fig:quadrants} depicts four quadrants surrounding a reference pixel $x_{00}$. If our reference pixel for some fixed quadrant is $x$, then we relabel this pixel $x_{00}$. The remaining pixels in the quadrant are then relabeled according to Figure \ref{fig:bi_interpolate}.

\begin{figure}[H]
    \centering
    \includegraphics[scale=0.25]{images/quadrant_fig}
    \caption{The figure above depicts four quadrants surrounding some interior reference pixel $x_{(0, 0)}$. Note that border or corner pixels will not have four surrounding quadrants.}
    \label{fig:quadrants}
\end{figure}




Our result is dependent on the changes between pixels in some neighborhood. The following definition formalizes the notion of local change we use throughout the remainder of this section.

\begin{definition}[Local changes]
We define the following quantities pertaining to local changes with respect to some reference pixel. We first define the local change between the reference pixel and its neighboring pixels:
\[
C_{\max}(x_{00}) \coloneqq \max_{i, j \in \setmath{-1, 0, 1}} \abs{x_{ij} - x_{00}} 
\]
We also consider the maximum local change while only taking into account the off-diagonal pixels. This can be written below:
\[
E_{\max}(x_{00}) \coloneqq \max_{i,j \in \{-1, 0, 1\} \text{s.t.} |i| \neq |j|}\abs{x_{ij} - x_{00}}
\]
\end{definition}

We also treat perturbations as functions $f:X\times \Theta \rightarrow X$ that take as input a pixel $x$ in the space of possible pixels $X$ and perturbation parameters $\theta$ and outputs another pixel. For clarity, we denote the flow perturbation function as $f_{\text{flow}}$ and the $\linf$ perturbation as $f_{\text{add}}$. Furthermore, observe that a perturbation's parameters $\theta$ must be constrained by some preset threat model. 

We now describe the perturbations we consider in our result.

\paragraph{Quantifying Flow}

Suppose our threat model allows a maximum flow perturbation at each pixel to be $\eps$. Thus, for any pixel, observe the following bounds on the horizontal and vertical flows (respectively) for a given pixel:
$$0 \le \eps_h, \eps_v \le \eps$$
Now, recall that a flow perturbation entails performing grid sampling from the grid of images, where the sampling is performed by bilinear interpolation. In other words, a pixel can flow into one of the (at most) four quadrants depicted in Figure \ref{fig:quadrants}. To explicitly quantify the value our new pixel assumes upon performing the bilinear interpolation, we state and prove the following key known equality we use throughout our proofs. 

\begin{figure}[H]
    \centering
    \includegraphics[scale=0.40]{images/bilinear_interpolation}
    \caption{The figure above depicts a single quadrant with reference pixel $x_{00}$. If we wish to perturb $x_{00}$ to $x_{00}^\prime$ using a flow perturbation, then $\eps_h$ and $\eps_v$ give the necessary horizontal flow and vertical parameters, respectively. This figure also depicts our pixel labeling convention once we select a quadrant.}
    \label{fig:bi_interpolate}
\end{figure}

\begin{lemma} 
\label{lemma:flow_function}
If a pixel $x_{00}$ is perturbed by a flow attack into a quadrant whose corners are $x_{00}, x_{01}, x_{10}$, and $x_{11}$, and if the horizontal and vertical perturbations are given by $\eps_h$ and $\eps_v$ respectively, then 
\begin{align*}
f_{\text{flow}}(x_{00}, (\eps_h, \eps_v)) &= x_{00} + (x_{10} - x_{00})\paren{\paren{1 - \eps_v}\eps_h}\\
&\quad + (x_{01} - x_{00})\paren{\eps_v\paren{1 - \eps_h}}\\
&\quad+ (x_{11} - x_{00})\paren{\eps_v\eps_h}.
\end{align*}
\end{lemma}
\begin{proof}
We use the bilinear interpolation procedure. Without loss of generality, we apply the horizontal perturbations first. The endpoints of the line we must vertically interpolate over are now:
$$x_{00}\paren{1 - \eps_h} + x_{10}\eps_h \quad \text{and}\quad x_{01}\paren{1 - \eps_h} + x_{11}\eps_h.$$
Performing the vertical interpolation yields:
\begin{align*}
    &\ \paren{1 - \eps_v}\paren{x_{00}\paren{1 - \eps_h} + x_{10}\eps_h} + \eps_v\paren{x_{01}\paren{1 - \eps_h} + x_{11}\eps_h} 
\end{align*}
Rearranging the above gives the result.

\end{proof}
Note that the magnitude of this perturbation can be expressed in terms of the pixel differences between the pixels in the appropriate quadrant and the reference pixel:
\begin{align*}
\abs{f_{\text{flow}}(x_{00}, (\eps_h, \eps_v)) - x_{00}} &= |(x_{10} - x_{00})\paren{\paren{1 - \eps_v}\eps_h}\\
&\quad+ (x_{01} - x_{00})\paren{\eps_v\paren{1 - \eps_h}}\\
&\quad + (x_{11} - x_{00})\paren{\eps_v\eps_h}|.
\end{align*}

Observe that as a consequence of the way we characterize flow perturbations, our flow upper bound $\eps$ must lie in $[0, 1]$. Thus, the set of all allowable outputs using a flow threat model with parameter $\eps$ is:
$$\setmath{f_{\text{flow}}(x, (\eps_h, \eps_v)) \suchthat 0 \le \eps_h, \eps_v \le \eps}$$
We can also specify the perturbation for the $\delta$ addition transformation:
$$f_{\text{add}}(x, \delta) = x + \delta$$
As a result, the set of all allowable outputs using a $\delta$ addition threat model with parameter $\delta$ is:
$$\setmath{f_{\text{add}}(x, \delta_a) \suchthat \abs{\delta_a} \le \delta}$$

\subsection{Main Theorem and Proof}

We now state and prove our main result.

\begin{theorem}[Formal version of Theorem 1] 
Consider an image $\mathcal{I}$ and associated $\linf$ and flow perturbation constraints of $\delta$ and $\eps$, respectively. Suppose there exist two distinct pixels $p, q \in \mathcal{I}$ such that $C_{\max}(p) < \frac{\delta}{2\eps}$ and $E_{\max}(q) \ge \frac{\delta}{\eps}$. Then, we can apply an $\delta$ addition to $p$ and a flow perturbation to $q$ which results in a new image $\mathcal{I}^\prime$ that is not attainable by using solely a $\delta$ addition or solely a flow perturbation.
\end{theorem}
\begin{proof}
To prove our main theorem, it suffices prove the following two lemmas. 
\begin{figure*}[t]
    \centering
    \subfigure{\includegraphics[width=0.47\columnwidth]{images/beach_full}}
    \hfill
    \subfigure{\includegraphics[width=0.47\columnwidth]{images/beach_locontrast}}
    \hfill
    \subfigure{\includegraphics[width=0.47\columnwidth]{images/beach_hicontrast}}
    \hfill
    \subfigure{\includegraphics[width=0.47\columnwidth]{images/hilo_contrast}}
    \caption{Left three images: An example from ImageNet highlighting the low and high contrast conditions we impose. The leftmost image is the original image. We then take the blue channel and apply our condition to the resulting pixel values. The middle image highlights in white the areas of low contrast. The right image highlights in white the areas of high contrast. Right: Scatter plot depicting the percentage of pixels satisfying each condition across 384 randomly sampled examples from CIFAR-10.}
    \label{fig:hilo_contrast}
\end{figure*}
The first lemma characterizes regions of our image that may benefit more from additive changes than from geometric changes. Such regions of the image may exhibit local smoothness, and as a result, there is not much geometry present to modify. 
\begin{lemma}
\label{lemma:flow_sucks}
Suppose that for some pixel $x_{00}$, we have $C_{\max}(x_{00}) < \frac{\delta}{2 \eps}$. Then, it follows that an attack using a $\delta$ addition results in a greater perturbation than an attack using at most $\eps$ flow. 
\end{lemma}
\begin{proof}
To prove the lemma, we must show that for all flows, the maximum pixel change is less than $\delta$. Without loss of generality, assume the flow moves to the top right quadrant. Then we have:
\begin{align*}
&\abs{f(x_{00}, (\eps_h, \eps_v)) - x_{00}}\\
&= \left|\paren{x_{10} - x_{00}}\paren{\eps_h\paren{1 - \eps_v}} + \paren{x_{01} - x_{00}}\paren{\eps_v\paren{1 - \eps_h}}\right.\\
&\quad+ \left.\paren{x_{11} - x_{00}}\eps_h\eps_v\right|\\
&\leq \abs{x_{10} - x_{00}}\paren{\eps_h\paren{1 - \eps_v}} + \abs{x_{01} - x_{00}}\paren{\eps_v\paren{1 - \eps_h}}\\
&\quad+ \abs{x_{11} - x_{00}}\eps_h\eps_v\\
& \leq C_{\max}(x_{00})\paren{\paren{1 - \eps_h}\eps_v + \paren{1 - \eps_v}\eps_h + \eps_v\eps_h}\\
& = C_{\max}(x_{00})\paren{\eps_h + \eps_v - \eps_v\eps_h}\\
& \le  2 \eps C_{\max}(x_{00}) < \delta
\end{align*}
Here the second inequality follows from observing that for any possible pixel change in the neighborhood, $\abs{x_{ij} - x_{00}} \leq C_{\max}(x_{00}) < \delta$. The third inequality follows from setting $\eps_h, \eps_v \leq \eps$.
\end{proof}

The following lemma characterizes regions of our image that are not locally smooth. Such regions of the image have extensive geometry that a flow attack might be able to exploit in ways that an $\linf$ attack cannot. We explicitly describe one such sufficient condition for this to occur.
\begin{lemma}
\label{lemma:linf_sucks}
Consider some reference pixel $x_{00}$, if $E_{\max}(x_{00}) > \frac{\delta}{\eps}$, a flow attack on $x_{00}$ with limit $\delta$ can result in a greater pixel change than an additive change of at most $\delta$ on $x_{00}$.
\end{lemma}
\begin{proof}
Without loss of generality, suppose $\abs{x_{01} - x_{00}} \ge \frac{\delta}{\eps}$. Set $\eps_h = 0$ and $\eps_v = \eps$. Using Lemma \ref{lemma:flow_function}, we have
\begin{align*}
    &\abs{f(x_{00}, (0, \eps)) - x_{00}}\\
    & =|\paren{x_{10} - x_{00}}\paren{\eps_h\paren{1 - \eps_v}} + \paren{x_{01} - x_{00}}\paren{\eps_v\paren{1 - \eps_h}}\\
    &\quad+ \paren{x_{11} - x_{00}}\eps_h\eps_v| \\
    &= \eps\abs{\paren{x_{01} - x_{00}}} > \eps \cdot \frac{\delta}{\eps} > \delta.
\end{align*}
This gives us a flow attack that is more powerful that a $\delta$ additive attack.
\end{proof}
Consider the pixels in our image satisfying the conditions required by Lemma \ref{lemma:flow_sucks}. By using a $\delta$ addition on these pixels instead of using a flow perturbation, we can achieve a greater pixel difference between the perturbed image and the original image than any flow attack. Similarly, we can achieve greater pixel differences if we use a flow perturbation on the pixels satisfying the conditions required by Lemma \ref{lemma:linf_sucks} instead of a $\delta$ addition. Since by the assumption in the theorem, both these sets are non-empty, we can produce an image that was not attainable solely by either perturbation.
\end{proof}

\subsection{Existence of High and Low Contrast Areas}
It is important to note that for our theorem statement to hold, we require that there exist low contrast areas as well as high contrast areas. To verify this, we inspected 384 randomly selected examples from CIFAR-10 and counted how many pixels satisfy our low or high contrast conditions. Our results are detailed in Figure \ref{fig:hilo_contrast}. Notably, a large fraction of pixels in each image we sampled satisfies one of our constraints, and every image contains at least one high and low contrast area\footnote{The two constraints are disjoint, so at most one can be satisfied for a given pixel.}. This indicates that combining these attacks gives us the power of exploring a larger set of potential adversarial images compared to any one attack individually. 

\section{Images}

\label{appendix:images}
\begin{figure*}[ht]
    \centering
    \subfigure{\includegraphics[width=12cm]{images/cifar_attack_suite_undefended_00.png}}
    \vfill 
    \subfigure{\includegraphics[width=12cm]{images/cifar_attack_suite_diffs_undefended_00.png}}
    \caption{Top: Various CIFAR-10 images with various successful attacks against an undefended network on each row. From top to bottom these are: originals, delta, StAdv, delta + StAdv, rot+trans, delta + rot + trans, delta + StAdv + rot + trans. Bottom: The differences between originals magnified by 5}.\label{fig:cifar_suite_undefended}
\end{figure*}    

\begin{figure*}[ht]
    \centering
    \subfigure{\includegraphics[width=12cm]{images/cifar_attack_suite_linf_00.png}}
    \vfill 
    \subfigure{\includegraphics[width=12cm]{images/cifar_attack_suite_diffs_linf_00.png}}
    \caption{Same figure as \ref{fig:cifar_suite_undefended} but with attacks performed against a delta-trained network}.
\end{figure*}    

\begin{figure*}[ht]
    \centering
    \subfigure{\includegraphics[width=12cm]{images/cifar_attack_suite_linfStadv_00.png}}
    \vfill 
    \subfigure{\includegraphics[width=12cm]{images/cifar_attack_suite_diffs_linfStadv_00.png}}
    \caption{Same figure as \ref{fig:cifar_suite_undefended} but with attacks performed against a delta + stadv trained network}.
\end{figure*}   

\newpage
\begin{figure*}[ht]
    \centering
    \subfigure{\includegraphics[scale=0.25]{images/imagenet_attack_suite_2.png}}
    \vfill 
    \subfigure{\includegraphics[scale=0.25]{images/imagenet_suite_diffs_2.png}}
    \caption{Top: Various ImageNet images with various successful attacks against an NasnetAMobile network on each row. From top to bottom these are: originals, delta, stAdv, delta + stAdv, rot+trans, delta + rot + trans. Bottom: The differences between originals magnified by 10. From this it is clear that completely imperceivable perturbations are all that is necessary to fool undefended ImageNet networks}\label{fig:imagenet_suite}
\end{figure*}    

\begin{figure*}[ht]
    \centering
    \subfigure{\includegraphics[scale=0.25]{images/imagenet_attack_suite_3.png}}
    \vfill 
    \subfigure{\includegraphics[scale=0.25]{images/imagenet_suite_diffs_3.png}}
    \caption{Same as figure \ref{fig:imagenet_suite}}
\end{figure*}